\newcommand{\beq}{\begin{equation}}
\newcommand{\eeq}{\end{equation}}
\newcommand{\beqa}{\begin{eqnarray}}
\newcommand{\eeqa}{\end{eqnarray}}
\newcommand{\beqan}{\begin{eqnarray*}}
\newcommand{\eeqan}{\end{eqnarray*}}
\renewcommand{\P}{\mathbb{P}}
\renewcommand{\Pr}{\mathbb{P}}
\newcommand{\E}{\mathbb{E}}
\newcommand{\indic}[1]{\mathbb{I}\{#1\}}
\let\R\undefined %sometimes it is defined as something I don't know
\newcommand{\Real}{\mathbb{R}}
\newcommand{\bN}{\mathbb{N}}
\newcommand{\cbar}{\,|\,}
\DeclareMathOperator*{\argmin}{arg\,min}
\DeclareMathOperator*{\argmax}{arg\,max}
\renewcommand{\phi}{\varphi}
\renewcommand{\epsilon}{\varepsilon}
\newcommand{\A}{\mathcal{A}}
\newcommand{\X}{\mathcal{X}}
\newcommand{\T}{\mathcal{T}}
\newcommand{\R}{\mathcal{R}}
\newcommand{\Rmax}{R_{\textsc{max}}}
\newcommand\defequal{\mathrel{\overset{\makebox[0pt]{\mbox{\tiny
          def}}}{=}}}
\newcommand{\algrule}[1][.2pt]{\par\vskip.5\baselineskip\hrule height #1\par\vskip.5\baselineskip}
\begin{document}

\mainmatter  % start of an individual contribution

% first the title is needed
\title{Q($\lambda$) with Off-Policy Corrections}

% a short form should be given in case it is too long for the running head
%\titlerunning{Q($\lambda$) with Off-Policy Corrections}

\author{Anonymous}

\author{Anna Harutyunyan$^{1}\thanks{This work was carried out during an internship at Google DeepMind.}$
\and Marc G. Bellemare$^2$ \and Tom Stepleton$^2$\and R\'{e}mi Munos$^2$}

%\authorrunning{Q($\lambda$) with Off-Policy Corrections}
% (feature abused for this document to repeat the title also on left hand pages)

\institute{
$^1$ VU Brussel \\
$^2$ Google DeepMind\\
\mailsa\\
\mailsb}

\maketitle

\begin{abstract}
We propose and analyze an alternate approach to off-policy multi-step
temporal difference learning, in which off-policy returns are
corrected with the current Q-function in terms of rewards, rather than
with the target policy in terms of transition
probabilities. We prove that such approximate
corrections are sufficient for off-policy convergence both in policy
evaluation and control, provided certain conditions. These conditions
relate the distance between the target and behavior policies, the eligibility
trace parameter and the discount factor, and formalize an underlying
tradeoff in off-policy TD($\lambda$). 
We illustrate this theoretical relationship 
empirically on a
continuous-state control task.
\end{abstract}

\section{Introduction}

In reinforcement learning (RL), learning is off-policy when samples 
generated by a {\em behavior} policy are used to learn about a distinct {\em
  target} policy. The usual approach to off-policy learning
 is to disregard, or altogether discard transitions whose target policy probabilities
 are low. 
For example, Watkins's Q($\lambda$)~\cite{watkins1992}
cuts the trajectory backup as soon as a non-greedy action is
encountered. Similarly, in policy evaluation, importance sampling
methods~\cite{precup2000eligibility} weight the returns according to
the mismatch in the target and behavior probabilities of the
corresponding actions. This approach treats transitions
conservatively, and hence may unnecessarily terminate backups, % unnecessarily
or introduce a large amount of variance.  % where there need not be any.

Many off-policy methods, in particular of the Monte Carlo kind, have
no other option than to judge off-policy actions in the probability sense.
However, {\em temporal difference} methods~\cite{sutton1988learning} in RL maintain an
approximation of the value function along the way, with {\em
  eligiblity traces}~\cite{watkins1989learning} providing a continuous link % trade-off
between one-step and Monte Carlo approaches. 
  The value function assesses actions in terms of the following expected
cumulative reward, and thus provides a way to directly correct
immediate {\em rewards}, rather than transitions.
We show in this paper that such approximate corrections can be
sufficient for off-policy convergence, subject to a tradeoff condition
between the eligibility
trace parameter and the distance between the target and behavior policies. The
two extremes of this tradeoff are one-step Q-learning, and on-policy
learning. Formalizing the continuum of the tradeoff is one of the
main insights of this paper.

In particular, we propose an off-policy return operator that 
augments the return with a correction term, based on the current
approximation of the Q-function. We then formalize three algorithms
stemming from this operator: (1) off-policy
Q$^\pi$($\lambda$), and its special case (2) {\em on}-policy
Q$^\pi$($\lambda$), for policy evaluation, and (3) Q$^*$($\lambda$) for
off-policy control. 

In policy evaluation, both on- and off-policy Q$^\pi$($\lambda$) are
novel, but closely related to several existing algorithms of the
TD($\lambda$) family. Section~\ref{sec:related_work}
discusses this in detail.  We prove convergence of Q$^\pi$($\lambda$),
subject to the $\lambda-\epsilon$ tradeoff where $\epsilon\defequal\max_x \|\pi(\cdot|x) - \mu(\cdot|x)\|_1$ 
is a measure of dissimilarity between the behavior and target policies.
More precisely, we prove that for any amount of
``off-policy-ness'' $\epsilon\in[0,2]$ there is an inherent maximum allowed backup length
value $\lambda= \frac{1-\gamma}{\gamma \epsilon}$, and taking $\lambda$ below this value guarantees convergence
to $Q^{\pi}$ without involving policy probabilities. This is desirable due to the
instabilities and variance introduced by the likelihood ratio products in the importance sampling
approach \cite{Precup2001}.

In control, Q$^*$($\lambda$) is in fact
identical to Watkins's Q($\lambda$), except
it does not cut the eligiblity trace at off-policy
actions. \citet{sutton-barto98} mention such a variation, which they call
{\em naive} Q($\lambda$). We analyze this algorithm for the first
time and prove its convergence for small values of $\lambda$. Although we
were not able to prove a $\lambda-\epsilon$ tradeoff similar to the policy evaluation
case, we provide empirical evidence for the existence of such a tradeoff, confirming the intuition that
naive Q($\lambda$) is ``not as naive as one might at first suppose''~\cite{sutton-barto98}. 

We first give the technical background, and define our operators. We
then specify the incremental versions of our algorithms based on these
operators, and
state their convergence. We follow by proving convergence:
subject to the $\lambda-\epsilon$ tradeoff in policy evaluation, and
more conservatively, for small values of $\lambda$ in control. 
We illustrate the tradeoff emerge empirically in the Bicycle
domain in the control setting. Finally, we conclude by placing our algorithms in context within
existing work in TD($\lambda$).

\section{Preliminaries}
 \label{sec:preliminaries}

We consider an environment modelled by the usual discrete-time Markov Decision
Process $(\X, \A, \gamma, P, r)$ composed of the finite
state and action spaces $\X$ and $\A$, a discount factor $\gamma$, a
transition function $P$ mapping each $(x,a) \in (\X, \A)$ to a
distribution over $\X$, and a reward function $r : \X \times \A \to
[-\Rmax, \Rmax]$. A \emph{policy} $\pi$ maps a state $x \in \X$ to a distribution over $\A$.
A Q-function $Q$ is a mapping $\X \times \A \to \Real$. Given a policy $\pi$, we define the operator $P^\pi$ over Q-functions:
\begin{equation*}
(P^\pi Q)(x,a) \defequal \sum_{x' \in \X} \sum_{a' \in \A} P(x' \cbar
x, a) \pi(a' \cbar x') Q(x', a') .
\end{equation*}
To each policy $\pi$ corresponds a unique Q-function $Q^\pi$ which describes the
expected discounted sum of rewards achieved when following $\pi$:
\begin{equation}
  Q^\pi \defequal \sum_{t \ge 0} \gamma^t (P^\pi)^t r, \label{eq:Qpi_def}
\end{equation}
where for any operator $X$, $(X)^t$ denotes $t$ successive applications of
$X$, and where we commonly treat $r$ as one particular Q-function. We write the {\em
  Bellman operator} $\T^\pi$, and the {\em Bellman equation} for $Q^\pi$:
 \begin{align}
 \T^\pi Q & \defequal r + \gamma P^\pi Q, \notag \\
  \T^\pi Q^\pi & = Q^\pi = (I - \gamma P^\pi)^{-1} r. \label{eq:Bellman_Qpi}
 \end{align}
The  {\em Bellman optimality operator} $\T$ is defined as
$  \T Q  \defequal r + \gamma \max_\pi P^{\pi} Q, $
and it is well known~\citep[e.g.][]{Bellman:1957, Puterman:1994} that the optimal Q-function $Q^*\defequal
\sup_\pi Q^\pi$ is the unique solution to the Bellman optimality equation 
\beq  \label{eq:T*}
\T Q = Q.
\eeq
We write $\textsc{Greedy}(Q) \defequal \{\pi | \pi(a|x) > 0
\Rightarrow Q(x,a) = \max_{a'}Q(x,a')\}$ to denote the set of greedy policies w.r.t.~$Q$. Thus $\T Q = \T^{\pi}Q$ for any $\pi\in \textsc{Greedy}(Q)$.

Temporal difference (TD) learning~\cite{sutton1988learning} rests on the fact that % While 
iterates of both operators $\T^\pi$ and $\T$ are guaranteed to converge to their
respective fixed points $Q^\pi$ and $Q^*$% ~\cite{?}
. Given a sample experience
$x,a,r,x',a'$, SARSA(0)~\cite{rummery1994line} updates its Q-function
estimate at $k^{th}$ iteration as follows:
\begin{align*}
  Q_{k+1}(x,a) & \gets Q_k(x,a) + \alpha_k\delta, \\
  \delta &= r + \gamma Q_k (x', a') - Q_k(x,a),
\end{align*}
where $\delta$ is the {\em TD-error}, and $(\alpha_k)_{k\in\bN}$ a sequence of nonnegative
stepsizes.
One need not only consider short experiences, but may sample
trajectories $x_0,a_0,r_0,x_1,a_1,r_1,\ldots$, 
and accordingly apply
$\T^\pi$ (or $\T$) repeatedly. A particularly flexible way of doing
this is via a weighted sum $A^\lambda$ of such {\em $n$-step} operators: 
\begin{align*}
%(\T^\pi)^n Q & = \sum_{t=0}^{n} \gamma^t (P^\pi)^t r + \gamma^{n + 1} (P^\pi)^{n+1} Q, \\ 
\T_\lambda^\pi Q &\defequal A^\lambda [ (\T^\pi)^{n + 1} Q ] \\
& = Q + (I - \lambda\gamma P^\pi)^{-1} (\T^\pi Q - Q), \\
A^\lambda [ f(n)] &\defequal (1 - \lambda) \sum_{n \ge 0} \lambda^n f(n).
\end{align*}
%where we denote by $A^\lambda [ f(n)]$ the average operator $(1 - \lambda) \sum_{n \ge 0} \lambda^n f(n).$

Naturally, $Q^\pi$ remains the fixed point of $\T_\lambda^\pi$. Taking
$\lambda = 0$ yields the usual Bellman operator $\T^\pi$, and $\lambda = 1$ removes the recursion on the approximate
Q-function, and restores $Q^\pi$ in the {\em Monte
Carlo} sense. It is well-known that $\lambda$ trades off the bias
from {\em bootstrapping} with an approximate Q-function, with the
variance from using a sampled multi-step return~\cite{kearns2000bias},
with intermediate values of $\lambda$ usually performing best in
practice~\citep{sutton1996generalization,singh1998analytical}. The above $\lambda$-operator can be efficiently implemented in the online setting via
a mechanism called {\em eligibility traces}.
 As we will see in Section \ref{sec:related_work}, it in
 fact corresponds to a number of online algorithms, each subtly
 different, of which SARSA($\lambda$)~\cite{rummery1994line} is the canonical instance.

Finally, we make an important distinction between the {\em target policy $\pi$}, which we wish to estimate, and
the {\em behavior policy
$\mu$}, from which the actions have been generated. If
$\mu=\pi$, the learning is said to be {\em  on-policy}, otherwise it is {\em off-policy}.
 %  Let us write a generic sequence
% $x_0,a_0,r_0,x_1,a_1,r_1,\ldots$, $a_i\sim\mu(\cdot|x_i)$. 
We will write $\E_\mu$ to denote expectations over sequences
$x_0,a_0,r_0,x_1,a_1,r_1,\ldots$, $a_i\sim\mu(\cdot|x_i)$,
$x_{i+1}\sim P(\cdot | x_i, a_i)$ and assume conditioning on
$x_0 = x$ and $a_0 = a$ wherever appropriate. Throughout, we will write
$\|\cdot\|$ for supremum norm. % , and $\indic{\cdot}$ for the
% indicator function.

\section{Off-Policy Return Operators}
\label{sec:return-operators}

We will now describe the Monte Carlo {\em off-policy corrected return operator}
$\R^{\pi,\mu}$ that is at the heart of our contribution. Given a
target $\pi$, and a return generated by the behavior $\mu$, the
operator $\R^{\pi,\mu}$ % (applied to $Q$) 
attempts to approximate a return that would have been
generated by $\pi$, by utilizing a correction built from a current
approximation $Q$ of $Q^{\pi}$. Its application to $Q$ at a state-action pair $(x,a)$ is defined as follows:
\begin{align}
    (\R^{\pi,\mu} Q)(x,a) 
   \defequal r(x,a) + \E_\mu\big[
   \sum_{t\geq 1}\gamma^t \big(r_t+
   \underbrace{\E_\pi{Q(x_t,\cdot)} - Q(x_t,a_t)}_{\mbox{off-policy correction}}\big) \big],\label{eq:Rxa}
  \end{align}
where 
we use the shorthand $\E_\pi Q(x, \cdot) \equiv
\sum_{a\in\A}\pi(a|x) Q(x,a)$.

That is, $\R^{\pi,\mu}$ gives the usual expected discounted sum of future rewards, but each
reward in the trajectory is augmented with an {\em off-policy
  correction}, which we define as the difference between the {\em expected}
(with respect to the target policy) Q-value and the Q-value for 
  the taken action. % , which we define as an {\em off-policy
  % correction}. 
Thus, how much a reward is corrected is determined by
both the approximation $Q$, and the target policy probabilities. % : $Q$
% provides the current belief about the values of other actions, and $\pi$ the weighting
% they receive in the correction. 
% This implies an interesting relationship to the stage
% of learning: 
Notice that if actions are similarly valued,
the correction will have little effect, and
learning will be roughly on-policy, but if the Q-function has converged to
the correct estimates $Q^\pi$, the correction takes the
immediate reward $r_t$ to the expected reward with respect to $\pi$
exactly. Indeed, as we will see later, $Q^\pi$ is the fixed point of $\R^{\pi,\mu}$ for any behavior policy $\mu$.

We define the $n$-step and $\lambda$-versions of $\R^{\pi,\mu}$ in the
usual way:
\beqa
   \R^{\pi,\mu}_\lambda Q &\defequal& A^\lambda [ \R^{\pi,\mu}_n ], \label{eq:lambdaR} \\
    (\R^{\pi,\mu}_n Q)(x,a)& \defequal& r(x,a) +
    \E_\mu\big[\sum_{t=1}^n\gamma^t \big(r_t+  \E_\pi{Q(x_t,\cdot)} -
    Q(x_t,a_t)\big) \notag 
   \\  & & 
+ \gamma^{n+1}\E_\pi
     Q(x_{n+1},\cdot)\big]. \notag \label{eq:Rnxa}
\eeqa
Note that the $\lambda$ parameter here takes us from TD($0$)
to the Monte Carlo version of our operator $\R^{\pi,\mu}$, rather than
the traditional Monte Carlo form~\eqref{eq:Qpi_def}.

\begin{algorithm}\label{alg:off_policy_online_algorithm}
\caption{Q($\lambda$) with off-policy corrections}
\begin{algorithmic}
\medskip
\item[\textbf{Given:}] Initial $Q$-function $Q_0$, stepsizes $(\alpha_k)_{k \in \bN}$
\FOR{$k = 1 \dots$}
    \STATE Sample a trajectory $x_0, a_0, r_0, \dots, x_{T_k}$ from $\mu_k$
    \STATE $Q_{k+1}(x,a) \gets Q_k(x,a) \qquad \forall x, a$
    \STATE $e(x,a) \gets 0 \qquad \forall x, a$ 
    \FOR{$t = 0 \dots T_k-1$}
        \STATE $\delta^{\pi_k}_t \gets r_t + \gamma \E_{\pi_k} Q_{k+1}(x_{t+1}, \cdot) - Q_{k+1}(x_t, a_t)$ 
        \FORALL{$x \in \X, a \in \A$}
            \STATE $e(x,a) \gets \lambda \gamma e(x,a) + \indic{(x_t, a_t) = (x, a)}$ 
            % Yes, the rhs is Q_{k+1} -- we're adding up the terms after initializing it to Q_k.
            \STATE $Q_{k+1}(x,a) \gets Q_{k+1}(x,a) + \alpha_k \delta^{\pi_k}_t e(x,a)$ 
        \ENDFOR
    \ENDFOR
\ENDFOR
\\\algrule[1pt]
\item[\textbf{On-policy Q$^\pi$($\lambda$):}] $\mu_k = \pi_k = \pi$.
\item[\textbf{Off-policy Q$^\pi$($\lambda$):}] $\mu_k \ne \pi_k = \pi$.
\item[\textbf{Q$^*$($\lambda$):}] $\pi_k \in \textsc{Greedy}(Q_k)$.
\end{algorithmic}
\end{algorithm}

\section{Algorithm}\label{sec:algorithm}

We consider the problems of \emph{off-policy policy evaluation} and
\emph{off-policy control}. In both problems we are given
data generated by a sequence of behavior
policies $( \mu_k)_{k \in \bN}$. In policy evaluation, we wish to
estimate $Q^{\pi}$ for a fixed target policy $\pi$. % Anna: changed
                                % it back to this, since the
                                % algorithm and operator are phrased for a fixed
                                % policy below
%sequence of target policies $\pi_k$. 
In
% off-policy 
control, we wish to estimate $Q^*$. Our algorithm constructs a sequence $(Q_k)_{k \in \bN}$ of estimates of
$Q^{\pi_k}$ from trajectories sampled from $\mu_k$, by applying the
$\R^{\pi_k,\mu_k}_\lambda$-operator:
\begin{equation}
  \label{eq:RQ_alg}
  Q_{k+1} = \R^{\pi_k,\mu_k}_\lambda Q_k,
\end{equation}
where $\pi_k$ is the $k^{th}$ interim target policy. We distinguish between three algorithms:
\begin{description}
\item[Off-policy Q$^\pi$($\lambda$) for policy evaluation:] $\pi_k =
  \pi$ is the fixed target
  policy. We write the corresponding operator $\R^\pi_\lambda$.
\item [On-policy Q$^\pi$($\lambda$) for policy evaluation:] 
for the special case of $\mu_k=\mu=\pi$.
\item[Q$^*$($\lambda$) for off-policy control:] $(\pi_k)_{k \in \bN}$ is a sequence
of greedy policies with respect to 
$Q_k$.
% , and $(\mu_k)_{k \in \bN}$ some
% arbitrary sequence of corresponding behavior policies. 
We write the corresponding operator
$\R^*_\lambda$.
\end{description}
We wish to write the update \eqref{eq:RQ_alg} in terms of a simulated
trajectory $x_0,a_0,r_0,\dots,$ $x_{T_k}$ drawn according to $\mu_k$.
% We wish to formulate \eqref{eq:RQ_alg} for each
% state action pair.
% Let
% $x_1,a_1,r_1,\dots, x_T$ be the trajectory drawn according to
% $\mu_k$.
First, notice that \eqref{eq:lambdaR} can be rewritten:
\begin{align*}
    \R^{\pi,\mu}_\lambda Q(x,a) & = Q(x,a) + \E_\mu\big[\sum_{t\geq
      0}(\lambda\gamma)^t\delta^\pi_t\big], \\
 \delta^\pi_t & \defequal r_t+ \gamma\E_\pi{Q(x_{t+1},\cdot)} - Q(x_t,a_t),
\end{align*}
where $\delta^\pi_t$ is the {\em expected} TD-error.
The {\em offline} forward view\footnote{The true online version can be derived as given by
  \citet{seijen2014true}} is then
\begin{align}
Q_{k+1}(x,a) &\gets Q_k(x,a) + \alpha_k\sum_{t=0}^{T_k} (\gamma \lambda)^t
\delta^{\pi_k}_t, \label{eqn:online_q_pi_update} 
\end{align}
While \eqref{eqn:online_q_pi_update} resembles % the updates of
many existing TD($\lambda$)
algorithms, it subtly differs from all of them, due to
$\R^{\pi,\mu}_\lambda$ (rather than $\T^\pi_\lambda$) being at its
basis. Section~\ref{sec:related_work} discusses the distinctions in
detail. The practical {\em every-visit}~\cite{sutton-barto98} form of
\eqref{eqn:online_q_pi_update} is written
\begin{align}
  Q_{k+1}(x,a) &\gets Q_k(x,a) + \alpha_k\sum_{t=0}^T
                 \delta^{\pi_k}_t\sum_{s = 0}^t (\gamma
                 \lambda)^{t-s} \indic{(x_s, a_s) = (x, a)},
                 \label{eqn:online_q_pi_update_every_visit}
\end{align}
and the corresponding online backward view of all three
algorithms is summarized in Algorithm 1.

The following theorem states that when $\mu$ and $\pi$ are
sufficiently close, the off-policy Q$^\pi$($\lambda$) algorithm
converges to its fixed point $Q^\pi$.

\begin{theorem}\label{thm:online_convergence}
Consider the sequence of Q-functions computed according to Algorithm 1
with fixed policies $\mu$ and $\pi$. Let $\epsilon = \max_x \|\pi(\cdot|x) - \mu(\cdot|x)\|_1$. If $\lambda \epsilon < \frac{1-\gamma}{\gamma}$, then under the
same conditions required for the convergence of $TD(\lambda)$ (1--3 in
Section 5.3)
we have,
almost surely:
\begin{equation*}
\lim_{k\to \infty} Q_k(x,a) = Q^\pi(x,a) .
\end{equation*}
\end{theorem}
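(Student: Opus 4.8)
The plan is to show that the expected operator $\R^{\pi,\mu}_\lambda$ driving the update \eqref{eq:RQ_alg} is a contraction in supremum norm with fixed point $Q^\pi$, and then to lift this to almost-sure convergence of the online iterates via a standard stochastic-approximation theorem. As a first step I would confirm the fixed point. Writing $\delta^\pi_t(Q)=r_t+\gamma\E_\pi Q(x_{t+1},\cdot)-Q(x_t,a_t)$ and conditioning on $(x_t,a_t)$, the tower rule gives $\E_\mu[\delta^\pi_t(Q^\pi)\cbar x_t,a_t]=(\T^\pi Q^\pi-Q^\pi)(x_t,a_t)=0$, so every term of $\E_\mu[\sum_{t\ge0}(\lambda\gamma)^t\delta^\pi_t(Q^\pi)]$ vanishes and $\R^{\pi,\mu}_\lambda Q^\pi=Q^\pi$, for \emph{any} behavior $\mu$.

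The core step is to obtain a clean operator expression for the error. Let $P^\mu$ denote the analogue of $P^\pi$ with $\mu$ in place of $\pi$, and set $\Delta\defequal Q-Q^\pi$. Using $\delta^\pi_t(Q)=\delta^\pi_t(Q^\pi)+\gamma\E_\pi\Delta(x_{t+1},\cdot)-\Delta(x_t,a_t)$, the zero-mean part drops out, and taking expectations under $\mu$ turns the $t$-th increment into $(P^\mu)^t(\gamma P^\pi-I)\Delta$; summing the geometric series in $\lambda\gamma P^\mu$ gives
\begin{align*}
\R^{\pi,\mu}_\lambda Q-Q^\pi
&=\Delta+(I-\lambda\gamma P^\mu)^{-1}(\gamma P^\pi-I)\Delta\\
&=\gamma(I-\lambda\gamma P^\mu)^{-1}(P^\pi-\lambda P^\mu)\Delta,
\end{align*}
where the second line absorbs the leading $\Delta$ via $\Delta=(I-\lambda\gamma P^\mu)^{-1}(I-\lambda\gamma P^\mu)\Delta$.

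I would then bound the two factors in sup norm. Since $P^\mu$ is stochastic, $\|(I-\lambda\gamma P^\mu)^{-1}\|\le(1-\lambda\gamma)^{-1}$, and decomposing $P^\pi-\lambda P^\mu=(1-\lambda)P^\pi+\lambda(P^\pi-P^\mu)$ together with the row-wise estimate $\|(P^\pi-P^\mu)\Delta\|\le\max_{x'}\|\pi(\cdot\cbar x')-\mu(\cdot\cbar x')\|_1\,\|\Delta\|=\eps\|\Delta\|$ yields $\|P^\pi-\lambda P^\mu\|\le(1-\lambda)+\lambda\eps$. Hence $\R^{\pi,\mu}_\lambda$ contracts toward $Q^\pi$ with modulus $\beta=\gamma\frac{(1-\lambda)+\lambda\eps}{1-\lambda\gamma}$, and a one-line rearrangement shows $\beta<1$ is \emph{exactly} the hypothesis $\lambda\eps<\frac{1-\gamma}{\gamma}$. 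Finally I would cast Algorithm 1 as the stochastic approximation $Q_{k+1}=Q_k+\alpha_k(\R^{\pi,\mu}_\lambda Q_k-Q_k+w_k)$ with $w_k$ the martingale-difference noise between the sampled trace-weighted return and its conditional mean $\R^{\pi,\mu}_\lambda Q_k$; since rewards are bounded the conditional second moment of $w_k$ grows at most quadratically in $\|Q_k\|$, so with the Robbins--Monro stepsizes the conditions 1--3 of the referenced result apply and give $Q_k\to Q^\pi$ almost surely.

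I expect the main obstacle to be the operator manipulation of the third and middle paragraphs: justifying the interchange of expectation with the infinite trace sum, collapsing it into the resolvent $(I-\lambda\gamma P^\mu)^{-1}$, and then extracting the \emph{tight} modulus $\beta$. Because the algorithm is genuinely off-policy, the resolvent is built from $P^\mu$ while the correction carries $P^\pi$, so it is precisely the cancellation that removes the leading $\Delta$ and exposes the factor $P^\pi-\lambda P^\mu$ where the $\lambda$--$\eps$ tradeoff originates; obtaining the constant $\frac{1-\gamma}{\gamma}$ rather than a looser bound hinges on the $(1-\lambda)P^\pi+\lambda(P^\pi-P^\mu)$ split and is the delicate part.
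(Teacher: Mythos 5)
Your proposal is correct and follows essentially the same route as the paper: the identity $\R^{\pi}_\lambda Q - Q^\pi = \gamma(I-\lambda\gamma P^\mu)^{-1}\bigl[(1-\lambda)P^\pi + \lambda(P^\pi - P^\mu)\bigr](Q-Q^\pi)$, the resolvent bound $(1-\lambda\gamma)^{-1}$, and the row-wise estimate $\|P^\pi - P^\mu\|\le\epsilon$ are exactly the paper's Lemma~\ref{lem:contraction_for_q_pi}, yielding the same modulus $\eta=\frac{\gamma}{1-\lambda\gamma}(1-\lambda+\lambda\epsilon)$ and the same equivalence with $\lambda\epsilon<\frac{1-\gamma}{\gamma}$. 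Your stochastic-approximation lift is also the paper's argument (Bertsekas--Tsitsiklis), though stated more briefly: the paper additionally tracks the visit-frequency scaling $D_k$ and an explicit residual $u_k$ reconciling the online backward view with the offline operator update, details you gloss but which do not change the substance.
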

 We state a similar, albeit weaker result for
 Q$^*$($\lambda$). 
 \begin{theorem}\label{thm:online_convergence_q*}
Consider the sequence of Q-functions computed according to Algorithm 1
with $\pi_k$ the greedy policy with respect to $Q_k$. If $\lambda <
\frac{1-\gamma}{2\gamma}$, then under the same conditions required for the convergence of
TD($\lambda$) (1--3 in
Section 5.3)
we have, almost surely:
\begin{equation*}
\lim_{k \to \infty} Q_k(x,a) = Q^*(x,a) .
\end{equation*}
\end{theorem}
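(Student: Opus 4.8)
The plan is to follow the same two-stage template one would use for Theorem~\ref{thm:online_convergence}: first identify the deterministic operator whose fixed point is $Q^*$ and show it contracts in supremum norm, then transfer this to almost-sure convergence of the online iterates via the stochastic-approximation conditions (1--3). The only structural change from the policy-evaluation case is that the Bellman optimality operator $\T$ replaces $\T^\pi$, which both removes the dependence on a single fixed target policy and forces us to use the cruder sup-norm contraction of $\T$.

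First I would compute the expected operator behind the update \eqref{eq:RQ_alg}. Taking $\E_{\mu}$ of the expected TD-errors along trajectories generated by the behavior policy, the correction terms telescope and one obtains the closed form
$$\R^{\pi,\mu}_\lambda Q = Q + (I - \lambda\gamma P^\mu)^{-1}(\T^\pi Q - Q).$$
Because $\pi_k$ is recomputed to be greedy with respect to the current iterate, $\T^{\pi_k} Q_k = \T Q_k$, so the expected update coincides with $H_k Q_k$ for the deterministic map $H_k Q \defequal Q + (I - \lambda\gamma P^{\mu_k})^{-1}(\T Q - Q)$. Since $\T Q^* = Q^*$ by \eqref{eq:T*}, we have $H_k Q^* = Q^*$: the optimal Q-function is a common fixed point of every $H_k$, irrespective of the behavior policy $\mu_k$.

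Next I would establish that each $H_k$ is a supremum-norm contraction toward $Q^*$ with a modulus uniform in $k$. Writing $B_k \defequal (I - \lambda\gamma P^{\mu_k})^{-1}$ and using the identity $I - B_k = -\lambda\gamma P^{\mu_k} B_k$ together with $\T Q^* = Q^*$, a short manipulation gives
$$H_k Q - Q^* = -\lambda\gamma P^{\mu_k} B_k (Q - Q^*) + B_k(\T Q - \T Q^*).$$
Bounding with $\|P^{\mu_k}\| = 1$, $\|B_k\| \le (1-\lambda\gamma)^{-1}$, and the contraction $\|\T Q - \T Q^*\| \le \gamma\|Q - Q^*\|$ of the optimality operator yields $\|H_k Q - Q^*\| \le \frac{\gamma(1+\lambda)}{1-\lambda\gamma}\|Q - Q^*\|$. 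The modulus is strictly below $1$ exactly when $\gamma(1+\lambda) < 1-\lambda\gamma$, i.e.\ when $\lambda < \frac{1-\gamma}{2\gamma}$, which is the hypothesis. This bound is the $\epsilon = 2$ specialization of the evaluation modulus $\frac{\gamma((1-\lambda)+\lambda\epsilon)}{1-\lambda\gamma}$: the greedy target can sit at $\ell_1$-distance as large as $2$ from $\mu$, and since we cannot pin the target to a single policy we must pay this worst case, which is why only the weaker, $\epsilon$-free condition is obtained.

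Finally, I would invoke the standard stochastic-approximation result for contractive operators used to prove TD($\lambda$) convergence (conditions 1--3): the online every-visit update \eqref{eqn:online_q_pi_update_every_visit} can be written as $Q_{k+1} = (1-\alpha_k)Q_k + \alpha_k(H_k Q_k + w_k)$ with conditionally zero-mean noise $w_k$ of bounded second moment (rewards lie in $[-\Rmax,\Rmax]$ and $\lambda\gamma < 1$ keeps the trace-weighted returns bounded). Since every $H_k$ is a contraction with common fixed point $Q^*$ and a uniform modulus below $1$, the Robbins--Monro stepsize conditions and infinitely-often visitation give $Q_k \to Q^*$ almost surely. I expect the main obstacle to be the control-specific handling of the moving greedy target: unlike policy evaluation, the target is a discontinuous function of the iterate, so care is needed to argue that the expected operator is still the well-defined contraction $H_k$ (the resolution being the identity $\T^\pi Q = \T Q$ for any $\pi \in \textsc{Greedy}(Q)$) and that the time-varying, behavior-dependent family $\{H_k\}$ falls within the scope of the stochastic-approximation theorem, which it does precisely because all its members share the fixed point $Q^*$ and the modulus bound.
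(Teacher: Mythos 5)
Your proposal is correct and follows essentially the same route as the paper: it establishes the sup-norm contraction $\|\R^*_\lambda Q - Q^*\| \le \frac{\gamma(1+\lambda)}{1-\lambda\gamma}\|Q - Q^*\|$ (your decomposition via $I - B_k = -\lambda\gamma P^{\mu_k}B_k$ is algebraically identical to the paper's factoring of $B$, since $B_k$ and $P^{\mu_k}$ commute), deduces the threshold $\lambda < \frac{1-\gamma}{2\gamma}$, and then transfers to almost-sure online convergence through the Bertsekas--Tsitsiklis stochastic-approximation framework, handling the moving greedy target exactly as the paper does via $\T^{\pi_k}Q_k = \T Q_k$ and the common fixed point $Q^*$ with uniform modulus.
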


% That is, Part 1 states that for any distance between the transition matrices induced by
% the target and behavior policies $\pi_k$ and $\mu_k$, there is a {\em
%   nonzero} value of $\lambda$, that guarantees convergence, and Part 2
% gives a form of the converse: for any $\lambda$, convergence is
% guaranteed in the limit, provided the behavior policy coincides with
% the target policy in the limit.

The proofs of these theorems rely on showing that
$\R^{\pi}_\lambda$ and $\R^{*}_\lambda$ are contractions (under
the stated conditions), and invoking classical stochastic
approximation convergence to their fixed point (such as Proposition
4.5 from~\cite{bertsekas1996neurodynamic}). We will
focus on the contraction lemmas, which are the crux of the proofs,
then outline the sketch of the online convergence argument.

\subsubsection{Discussion}
\label{sec:discussion}
Theorem \ref{thm:online_convergence} states that for {\em any}
$\lambda\in[0,1]$ there exists some degree of ``off-policy-ness''
$\epsilon<\frac{1-\gamma}{\lambda\gamma}$ under which $Q_k$ converges
to $Q^{\pi}$. This is the $\lambda-\epsilon$ tradeoff for the
off-policy $Q^{\pi}(\lambda)$ learning algorithm for policy
evaluation. In the control case, the result of
Theorem~\ref{thm:online_convergence_q*} is weaker as it only holds for values of $\lambda$ smaller than $\frac{1-\gamma}{2\gamma}$. Notice that this threshold corresponds to the policy evaluation case for $\epsilon=2$ (arbitrary off-policy-ness). We were not able to prove convergence to $Q^*$ for any $\lambda\in[0,1]$ and some $\epsilon>0$. This is left as an open problem for now. 

The main technical difficulty lies in the fact that in control, the
greedy policy with respect to the current $Q_k$ may change drastically
from one step to the next, while $Q_k$ itself changes incrementally
(under small learning steps $\alpha_k$). So the current $Q_k$ may not offer a good off-policy correction to evaluate the new greedy policy. In order to circumvent this problem we may want to use slowly changing target policies $\pi_k$. For example we could keep $\pi_k$ fixed for slowly increasing periods of time. This can be seen as a form of optimistic policy iteration \cite{Puterman:1994} where policy improvement steps alternate with approximate policy evaluation steps (and when the policy is fixed, Theorem~\ref{thm:online_convergence} guarantees convergence to the value function of that policy). Another option would be to define $\pi_k$ as the empirical average $\pi_k\defequal\frac{1}{k}\sum_{i=1}^k \pi_i'$ of the previous greedy policies $\pi'_i$. We conjecture that defining $\pi_k$ such that (1) $\pi_k$ changes slowly with $k$, and (2) $\pi_k$ becomes increasingly greedy, then we could extend the $\lambda - \epsilon$ tradeoff of Theorem \ref{thm:online_convergence} to the control case. This is left for future work.

\section{Analysis}
\label{sec:analysis}

We begin by verifying that the fixed points of $\R^{\pi,\mu}_\lambda$ in
the policy evaluation and control settings are $Q^\pi$ and $Q^*$,
respectively. We then prove the contractive properties of these
operators: $\R^{\pi}_\lambda$ is always a contraction
and will converge to its fixed point, $\R^*_\lambda$ is a contraction
for particular choices of $\lambda$ (given in terms of $\gamma$).
The contraction coefficients depend on $\lambda$, $\gamma$,
and $\epsilon$: the distance between policies. Finally, we give a
proof sketch for online convergence of Algorithm 1.

Before we begin, it will be convenient to rewrite \eqref{eq:Rxa} for all state-action pairs:
\begin{align*}
\R^{\pi,\mu} Q & = r + \sum_{t\geq1} \gamma^t (P^{\mu})^{t-1} [
P^{\mu}r + P^{\pi} Q - P^{\mu} Q ]. 
\end{align*}
We can then write $\R^\pi_\lambda$ and
$\R^*_\lambda$ from \eqref{eq:lambdaR} as follows:
\begin{align}
  \R^{\pi}_\lambda Q & \defequal  Q + (I- \lambda \gamma P^{\mu})^{-1}
  [\T^{\pi} Q - Q ], \label{eq:RlambdaQ_Tpi} \\
  \R^{*}_\lambda Q & \defequal  Q + (I- \lambda \gamma P^{\mu})^{-1}
  [\T Q - Q ]. \label{eq:RlambdaQ_Tstar}
\end{align}
% Note that $\R^{\pi,\mu}_\lambda$ coincides with $\T^\pi_\lambda$
% on-policy (when $\mu=\pi$).
It is not surprising that the above along with the Bellman equations
\eqref{eq:Bellman_Qpi} and \eqref{eq:T*} directly yields that $Q^\pi$ and $Q^*$ are the fixed points of $\R_\lambda^{\pi}$ and
$\R^*_\lambda$:
  \begin{align}
    \R^{\pi}_\lambda Q^\pi &= Q^\pi, \notag \\
    \R^*_\lambda Q^* &= Q^*. \notag 
  \end{align}
% or that $Q^\pi$ and $Q^*$ are the fixed points of $\R_\lambda^{\pi}$ and
% $\R^*_\lambda$.
It then remains to analyze the behavior of $\R^{\pi,\mu}_{\lambda}$ as it gets iterated.

\subsection{$\lambda$-return for policy evaluation: Q$^\pi$($\lambda$)}
\label{sec:naive-qpilambda}

We first consider the case with a fixed arbitrary policy $\pi$. For
simplicity, we take $\mu$ to be fixed as well, but the same will hold
for any sequence $(\mu_k)_{k\in\bN}$, as long as each $\mu_k$ satisfies
the condition imposed on $\mu$.

\begin{lemma}\label{lem:contraction_for_q_pi} % [$\lambda$-return for policy evaluation: Q$^\pi$($\lambda$)] 
  Consider the policy evaluation algorithm $Q_k =
  (\R^{\pi}_\lambda)^k Q$. Assume the behavior policy $\mu$ is
  $\epsilon$-away from the target policy $\pi$, in the sense that $\max_x \|\pi(\cdot|x) - \mu(\cdot|x)\|_1\leq \epsilon$.
Then for
  $\epsilon<\frac{1-\gamma}{\lambda\gamma}$, the sequence
  $(Q_k)_{k\geq 1}$ converges to $Q^\pi$ exponentially fast:
  $\|Q_k-Q^\pi\| = O(\eta^k)$, where $\eta = \frac{\gamma}{1-\lambda\gamma}(1-\lambda + \lambda\epsilon)<1$. 
\end{lemma}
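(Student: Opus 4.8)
The plan is to show that $\R^{\pi}_\lambda$ is a sup-norm contraction toward its fixed point $Q^\pi$ with modulus exactly $\eta$, and then conclude exponential convergence by iterating. Starting from the closed form \eqref{eq:RlambdaQ_Tpi} and substituting $\T^\pi Q = r + \gamma P^\pi Q$, I would first simplify $\R^\pi_\lambda Q$ into an affine map of $Q$. Writing $M \defequal (I - \lambda\gamma P^\mu)^{-1}$ and using the elementary identity $I - M = -\lambda\gamma M P^\mu$, the expression collapses to $\R^\pi_\lambda Q = \gamma M (P^\pi - \lambda P^\mu) Q + M r$. Since $Q^\pi$ is the fixed point (already established earlier in the section), subtracting $\R^\pi_\lambda Q^\pi = Q^\pi$ cancels the inhomogeneous term $Mr$ and yields the clean linear relation
\begin{equation*}
\R^\pi_\lambda Q - Q^\pi = \gamma (I - \lambda\gamma P^\mu)^{-1}(P^\pi - \lambda P^\mu)(Q - Q^\pi).
\end{equation*}
It then suffices to bound the sup-norm operator norm of $G \defequal \gamma M(P^\pi - \lambda P^\mu)$ applied to $\Delta \defequal Q - Q^\pi$.

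The crux — and the step I expect to be the main obstacle — is to split $P^\pi - \lambda P^\mu$ so that the policy-mismatch contribution is weighted by $\lambda$ rather than by $1$; this is precisely what produces the $\lambda\epsilon$ (not $\epsilon$) in the final coefficient, and hence the tradeoff. The correct decomposition is
\begin{equation*}
P^\pi - \lambda P^\mu = \lambda(P^\pi - P^\mu) + (1-\lambda)P^\pi .
\end{equation*}
A careless split such as $(P^\pi - P^\mu) + (1-\lambda)P^\mu$ would give the weaker modulus $(1-\lambda+\epsilon)$ and miss the intended relationship. For the mismatch term I would bound, for any $(x,a)$, the inner sum $\sum_{a'}(\pi(a'|x') - \mu(a'|x'))\Delta(x',a')$ by Hölder's inequality against $\|\pi(\cdot|x')-\mu(\cdot|x')\|_1 \le \epsilon$, giving $\|(P^\pi - P^\mu)\Delta\| \le \epsilon\|\Delta\|$; the remaining term obeys $\|P^\pi\Delta\|\le\|\Delta\|$ because $P^\pi$ is stochastic.

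For the resolvent factor I would use the Neumann series $M = \sum_{n\ge0}(\lambda\gamma P^\mu)^n$ together with $\|P^\mu\| \le 1$ to obtain $\|M\| \le (1-\lambda\gamma)^{-1}$. Combining the three bounds gives
\begin{equation*}
\|G\Delta\| \le \frac{\gamma}{1-\lambda\gamma}\big(\lambda\epsilon + (1-\lambda)\big)\|\Delta\| = \eta\|\Delta\|,
\end{equation*}
so $\R^\pi_\lambda$ contracts toward $Q^\pi$ with modulus $\eta$. Finally I would verify that the hypothesis $\epsilon < \frac{1-\gamma}{\lambda\gamma}$ is exactly equivalent to $\eta<1$: the inequality $\gamma(1-\lambda+\lambda\epsilon) < 1-\lambda\gamma$ has its $\lambda\gamma$ terms cancel, leaving $\gamma\lambda\epsilon < 1-\gamma$. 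Iterating the contraction then gives $\|Q_k - Q^\pi\| \le \eta^k\|Q_0 - Q^\pi\| = O(\eta^k)$, which completes the argument.
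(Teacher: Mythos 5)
Your proof is correct and follows essentially the same route as the paper's: both reduce to the linear relation $\R^\pi_\lambda Q - Q^\pi = \gamma (I-\lambda\gamma P^\mu)^{-1}\big[(1-\lambda)P^\pi + \lambda(P^\pi - P^\mu)\big](Q-Q^\pi)$, bound $\|P^\pi - P^\mu\|\le\epsilon$ via the $\ell_1$ policy distance, bound the resolvent by $(1-\lambda\gamma)^{-1}$ via the Neumann series, and check that $\eta<1$ is equivalent to $\epsilon < \frac{1-\gamma}{\lambda\gamma}$. The only cosmetic difference is that you cancel the affine term $Mr$ by invoking the fixed-point property of $\R^\pi_\lambda$, while the paper substitutes the Bellman equation $Q^\pi = r + \gamma P^\pi Q^\pi$ directly in the algebra; these are interchangeable.
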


\begin{proof}
 First notice that
 \beqan
  \| P^{\pi}-P^{\mu}\| &=& \sup_{\|Q \|\leq 1} \| (P^{\pi} -P^{\mu} )Q\| \\
  &=& \sup_{\|Q \|\leq 1} \max_{x,a} \Big| \sum_y P(y|x,a) \sum_b \left((\pi(b|y) - \mu(b|y)\right) Q(y,b) \Big|\\
  &\leq& \max_{x,a} \sum_y P(y|x,a) \sum_b | \pi(b|y) - \mu(b|y) | 
%\\ & &
\leq \epsilon.
  \eeqan
 Let $B = (I-\lambda\gamma P^{\mu})^{-1}$ be the resolvent
 matrix. From~\eqref{eq:RlambdaQ_Tpi} we have

\begin{align*}
\R^{\pi}_\lambda Q  - Q^{\pi} & = B\big[\T^\pi Q - Q + (I -
  \lambda\gamma P^\mu)(Q - Q^\pi) \big] \\
& = B\big[r + \gamma P^\pi Q - Q^\pi - \lambda\gamma P^\mu (Q - Q^\pi) \big] \\
& = B\big[\gamma P^\pi (Q - Q^\pi) - \lambda\gamma P^\mu (Q - Q^\pi)
\big] \\
& = \gamma B\big[(1-\lambda) P^\pi + \lambda(P^\pi - P^\mu)\big](Q-Q^\pi).
\end{align*}
Taking the sup norm, since $\mu$ is $\epsilon$-away from $\pi$:
\begin{align*}
\| \R^{\pi}_\lambda Q - Q^\pi\| \leq \eta\|Q - Q^\pi\|
\end{align*}
for $\eta = \frac{\gamma}{1 -
  \lambda\gamma} (1-\lambda + \lambda\epsilon) < 1$. Thus $\|Q_k -
Q^\pi\| = O(\eta^k)$.
\end{proof}

\subsection{$\lambda$-return for control:  Q$^*$($\lambda$)}
\label{sec:naive-qlambda}

We next consider the case where the $k^{th}$ target policy $\pi_k$ is
greedy with respect to the value estimate $Q_k$. % When
% $\lambda = 0$, this is the familiar Q-Learning algorithm.
The
following Lemma states that is possible to select a small, but nonzero $\lambda$
and still guarantee convergence.

\begin{lemma}\label{lem:contraction_for_q_star} % [$\lambda$-return for control:  Q$^*$($\lambda$)]
Consider the off-policy control algorithm $Q_{k}=(\R_\lambda^*)^k Q$. Then 
  \begin{equation*}
  \| \R^*_\lambda Q_k - Q^* \| \le \frac{\gamma + \lambda \gamma}{1 - \lambda \gamma} \| Q_k - Q^* \|,
  \end{equation*}
  and for $\lambda < \frac{1 - \gamma}{2 \gamma}$ the sequence $(Q_k)_{k \geq 1}$ converges to $Q^*$ exponentially fast.
\end{lemma}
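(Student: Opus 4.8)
The plan is to follow the same template as the proof of Lemma~\ref{lem:contraction_for_q_pi}, replacing the linear Bellman operator $\T^\pi$ by the nonlinear Bellman optimality operator $\T$, and the fixed point $Q^\pi$ by $Q^*$. Writing $B = (I-\lambda\gamma P^\mu)^{-1}$ for the resolvent of the behavior policy, I would first establish the algebraic identity
\begin{align*}
\R^*_\lambda Q - Q^* = B\big[(\T Q - Q^*) - \lambda\gamma P^\mu (Q - Q^*)\big].
\end{align*}
This is obtained from the definition~\eqref{eq:RlambdaQ_Tstar} exactly as in the evaluation case: substitute $Q - Q^* = B(I-\lambda\gamma P^\mu)(Q-Q^*)$ into $\R^*_\lambda Q - Q^* = (Q - Q^*) + B[\T Q - Q]$, collect terms, and use $\T Q^* = Q^*$ from~\eqref{eq:T*}.

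The divergence from the evaluation proof appears at the next step. There, linearity of $\T^\pi$ allowed the exact substitution $\T^\pi Q - Q^\pi = \gamma P^\pi(Q - Q^\pi)$, and the resulting $(P^\pi - P^\mu)$ term is what produced the $\epsilon$-dependent factor. Here $\T$ is nonlinear, so I can only use it as a black box through the triangle inequality together with two elementary bounds: (i) $\T$ is a supremum-norm $\gamma$-contraction, so $\|\T Q - Q^*\| = \|\T Q - \T Q^*\| \le \gamma\|Q - Q^*\|$; and (ii) $\|P^\mu\| \le 1$, whence the Neumann series $B = \sum_{n\ge 0}(\lambda\gamma P^\mu)^n$ gives $\|B\| \le \frac{1}{1-\lambda\gamma}$.

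Taking supremum norms in the identity and applying (i)--(ii) then yields
\begin{align*}
\|\R^*_\lambda Q - Q^*\| \le \frac{1}{1-\lambda\gamma}\big(\gamma\|Q-Q^*\| + \lambda\gamma\|Q-Q^*\|\big) = \frac{\gamma + \lambda\gamma}{1-\lambda\gamma}\,\|Q-Q^*\|,
\end{align*}
which is the claimed bound. Requiring the factor to be strictly below $1$ rearranges to $\gamma + 2\lambda\gamma < 1$, i.e.\ $\lambda < \frac{1-\gamma}{2\gamma}$; under this condition $\R^*_\lambda$ is a supremum-norm contraction with a fixed factor $\eta^* < 1$, so iterating gives $\|Q_k - Q^*\| \le (\eta^*)^k\|Q_0 - Q^*\|$ and hence exponential convergence. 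Note that $\mu$ enters only through $\|P^\mu\|\le 1$ and $\|B\|$, so the estimate holds for an arbitrary behavior policy.

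The main difficulty here is conceptual rather than computational. Because $\T$ can only be controlled through the crude $\gamma$-contraction estimate, the argument cannot exploit any closeness between $\mu$ and the greedy policy, and the refined $\lambda$-$\epsilon$ structure of Lemma~\ref{lem:contraction_for_q_pi} collapses to its worst case $\epsilon = 2$. This is exactly why the control threshold $\frac{1-\gamma}{2\gamma}$ is weaker, and it suggests that any sharpening would require a less lossy handle on how $\T Q$ relates to $P^\mu(Q-Q^*)$ along the iteration, which is delicate precisely because the greedy policy can change abruptly with $Q_k$.
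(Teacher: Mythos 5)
Your proposal is correct and follows essentially the same route as the paper's proof: the identity $\R^*_\lambda Q - Q^* = B\left[\T Q - Q^* - \lambda\gamma P^\mu(Q - Q^*)\right]$, the $\gamma$-contraction bound $\|\T Q - Q^*\| \le \gamma\|Q - Q^*\|$, and the bound $\|B\| \le \frac{1}{1-\lambda\gamma}$ via the Neumann series. The only difference is presentational: you spell out the Neumann-series estimate and the rearrangement giving $\lambda < \frac{1-\gamma}{2\gamma}$, which the paper leaves implicit.
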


\begin{proof}
Fix $\mu$ and let $B = (I-\lambda\gamma P^{\mu})^{-1}$. Using \eqref{eq:RlambdaQ_Tstar}, we write
\begin{align*}
\R^*_\lambda Q - Q^* &= B \left [ \T Q - Q + (I - \lambda \gamma P^\mu) (Q - Q^*) \right ] \\
&= B \left [ \T Q - Q^* - \lambda \gamma P^\mu (Q - Q^*) \right ].
\end{align*}
Taking the sup-norm, since $\| \T Q - Q^* \| \le \gamma \| Q - Q^* \|$,
we deduce the result:
\begin{equation*}
\big \| \R^*_\lambda Q - Q^* \big \| \le \frac{\gamma + \lambda \gamma}{1-\lambda \gamma} \big \| Q - Q^* \big \|.
\end{equation*}
%By rearranging terms, we find that $\R^*_\lambda$ is a contraction w.r.t. $Q^*$ when $\lambda < \frac{1-\gamma}{2\gamma}$.
% TODO(MGB): Mention Bruno's paper.
\end{proof}

\subsection{Online Convergence}
\label{sec:online-convergence}
We are now ready to prove the online convergence of Algorithm 1. Let
the following hold for every sample trajectory
$\tau_k$ and all $x \in \X, a \in \A$:
\begin{enumerate}
\item \textbf{Minimum visit frequency: } $\sum_{t\geq 0}\P\{ x_t, a_t = x, a\} \ge
  D > 0$.
\item \textbf{Finite trajectories: } $\E_{\mu_k} T_k^2 < \infty$, where $T_k$
  is the length of $\tau_k$.
\item \textbf{Bounded stepsizes: } $\sum_{k\geq 0} \alpha_k(x,a) =
  \infty$, $\sum_{k \geq 0} \alpha_k^2(x,a) < \infty$.
\end{enumerate}
Assumption 2 requires trajectories to be finite w.p. 1, which is
satisfied by {\em proper} behavior policies. 
Equivalently, we may require from the MDP that all trajectories eventually reach a zero-value absorbing
state.
 % we may require from the MDP that all
% trajectories eventually reach a zero-value absorbing state 
% $x^o$ (with $T_k$ being the first $t$ for
% which $x_{k, t} = x^o$). 
The proof closely follows that of Proposition 5.2 from
\cite{bertsekas1996neurodynamic}, and requires rewriting the update in the suitable
form, and verifying Assumptions (a) through (d) from their Proposition
4.5.
 % Note that the proof for the
%  control case goes through without modifications, for the values of
% $\lambda$ prescribed by Lemma~\ref{lem:contraction_for_q_star}.

\begin{proof} (Sketch)
Let $z_{k,t}(x,a) \defequal \sum_{s = 0}^t (\gamma \lambda)^{t-s}
\indic{(x_s, a_s) = (x, a)}$ denote the accumulating trace. It follows
from Assumptions 1 and 2 that the total update at phase $k$ is
bounded, which allows us to write the online version of~\eqref{eqn:online_q_pi_update_every_visit} as 
\begin{align*}
  Q^{o}_{k+1}(x,a) & \gets (1 - D_k\alpha_k) Q^{o}_k(x,a) + D_k \alpha_k \big(
                 \R^{\pi_k,\mu_k}_\lambda Q^{o}_k(x, a) + w_k +
                     u_k \big) \\
  w_k & \defequal (D_k)^{-1}\Big[ \sum_{t\geq 0} z_{k,t}
\delta^{\pi_k}_{t} - \E_{\mu_k} \big[ \sum_{t\geq 0} z_{k,t}
  \delta^{\pi_k}_{t} \big]\Big], \\
  u_k & \defequal  (D_k\alpha_k)^{-1} \big( Q^o_{k+1}(x, a) -
              Q_{k+1}(x, a) \big),
\end{align*}
where $D_k(x, a) \defequal \sum_{t \ge 0} \Pr\{ x_t, a_t
= x, a\}$, and we use the shorthand $y_k \equiv y_k(x, a)$ for $\alpha_k$,
$D_k$, $w_k$, $u_k$, and $z_{k, t}$.
Combining Assumptions 1 and 2, we have $0 < D \le D_k(x,a) < \infty$,
which, combined in turn with Assumption 3, assures that the new
stepsize sequence $\tilde\alpha_k(x, a) = (D_k\alpha_k)(x, a)$
satisfies Assumption (a) of Prop. 4.5.  Assumptions (b) and (d) require
the variance of the noise term $w_k(x, a)$ to be bounded, and the
residual $u_k(x, a)$ to converge to zero, both of which can be shown identically to the corresponding
results from \cite{bertsekas1996neurodynamic}, if Assumption 2 % if the behavior
% policy is {\em proper} (guaranteed by Assumption 2 above),
and Assumption (a) are satisfied. Finally, Assumption (c) is satisfied by Lemmas
\ref{lem:contraction_for_q_pi} and \ref{lem:contraction_for_q_star}
for the policy evaluation and control cases,
respectively.\footnote{Note that the control case goes through without
  modifications, for the values of $\lambda$ prescribed by
  Lemma~\ref{lem:contraction_for_q_star}.} We conclude that the sequence $(Q^o_k)_{k \in \bN}$
converges to $Q^\pi$ or $Q^*$ in the respective settings, w.p. 1.
\end{proof}

\section{Experimental Results}
\label{sec:experiments}

Although we do not have a proof of the $\lambda-\epsilon$ tradeoff
(see % Discussion in
Section \ref{sec:algorithm}) in the control case% for the control algorithm Q$^*$($\lambda$)
, we wished to investigate whether such a
tradeoff can be observed experimentally. To this end, we applied Q$^*$($\lambda$) to
the Bicycle domain~\citep{randlov98learning}. Here, the agent must
simultaneously balance a % a simulated 
bicycle and drive it to a goal position. % 1km north of its initial position. 
Six real-valued variables describe the state -- angle, velocity,
 etc. -- of the bicycle.
The reward function is proportional
to the angle to the goal, and gives -1 for falling and +1 for reaching
the goal. 
The discount factor is 0.99. The Q-function
was approximated using multilinear interpolation over a uniform grid
 of size $10 \times \dots \times 10$, and the stepsize was tuned to 0.1.
 % from a parameter sweep. 
 % Each configuration is an average of five
 % trials.
We are chiefly interested in the interplay between the $\lambda$ 
parameter in Q$^*$($\lambda$) and an $\epsilon$-greedy exploration policy. 
Our main performance indicator is the frequency at which the goal is reached
by the greedy policy after 500,000 episodes of
training.
We report three findings:
\begin{enumerate}
    \item{Higher values of $\lambda$ lead to improved learning;}
    \item{Very low values of $\epsilon$ exhibit lower performance; and} 
    \item{The Q-function diverges when $\lambda$ is high relative to $\epsilon$.}
\end{enumerate}
Together, these findings suggest that there is indeed a
$\lambda-\epsilon$ tradeoff in the control case as well, and lead us
to conclude that with proper care it can be beneficial to do
off-policy control with Q$^*$($\lambda$).

\begin{figure*}[ht!]
\begin{center}
\includegraphics[width=2.1in]{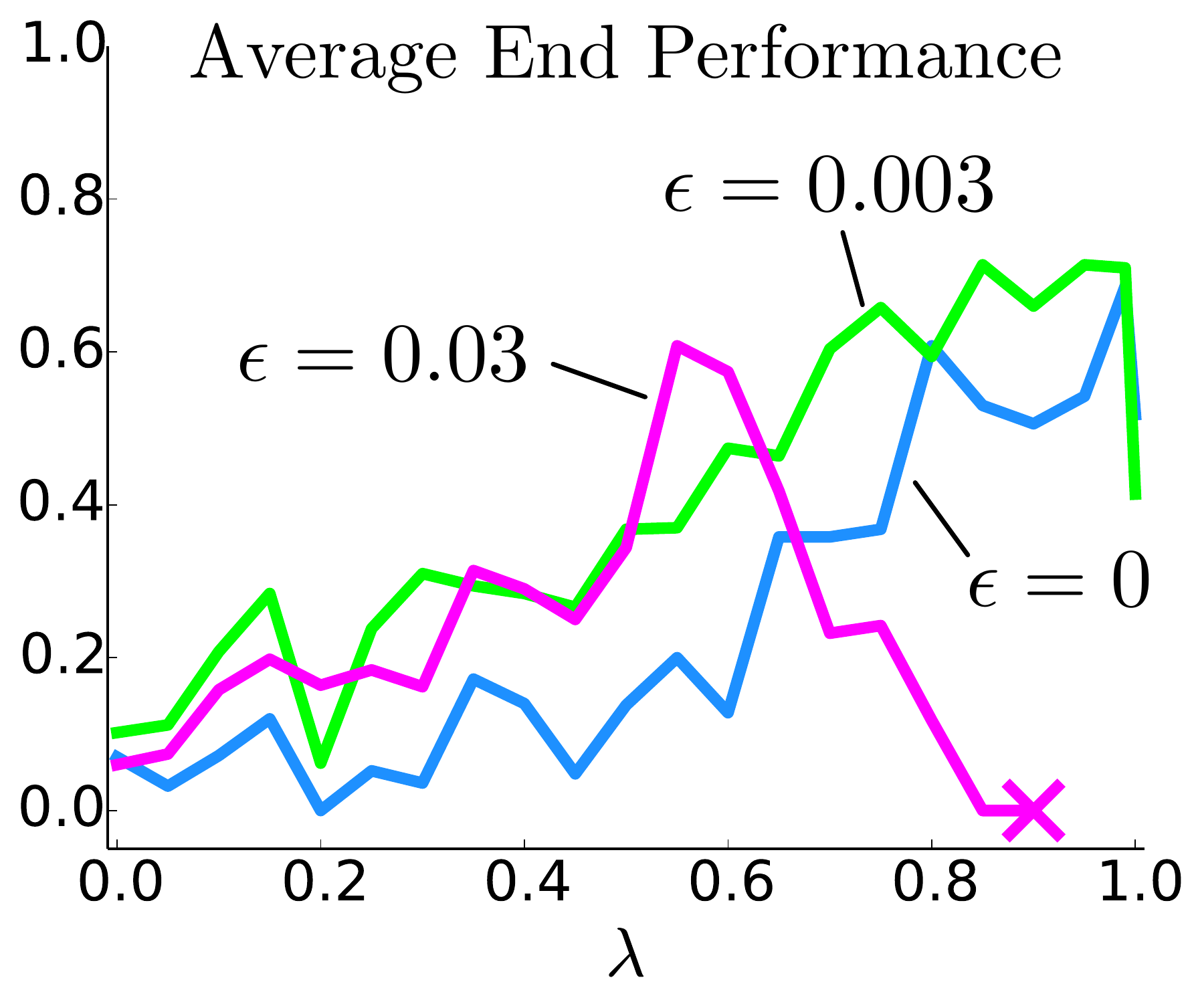}
\hspace{4em}
\includegraphics[width=2.1in]{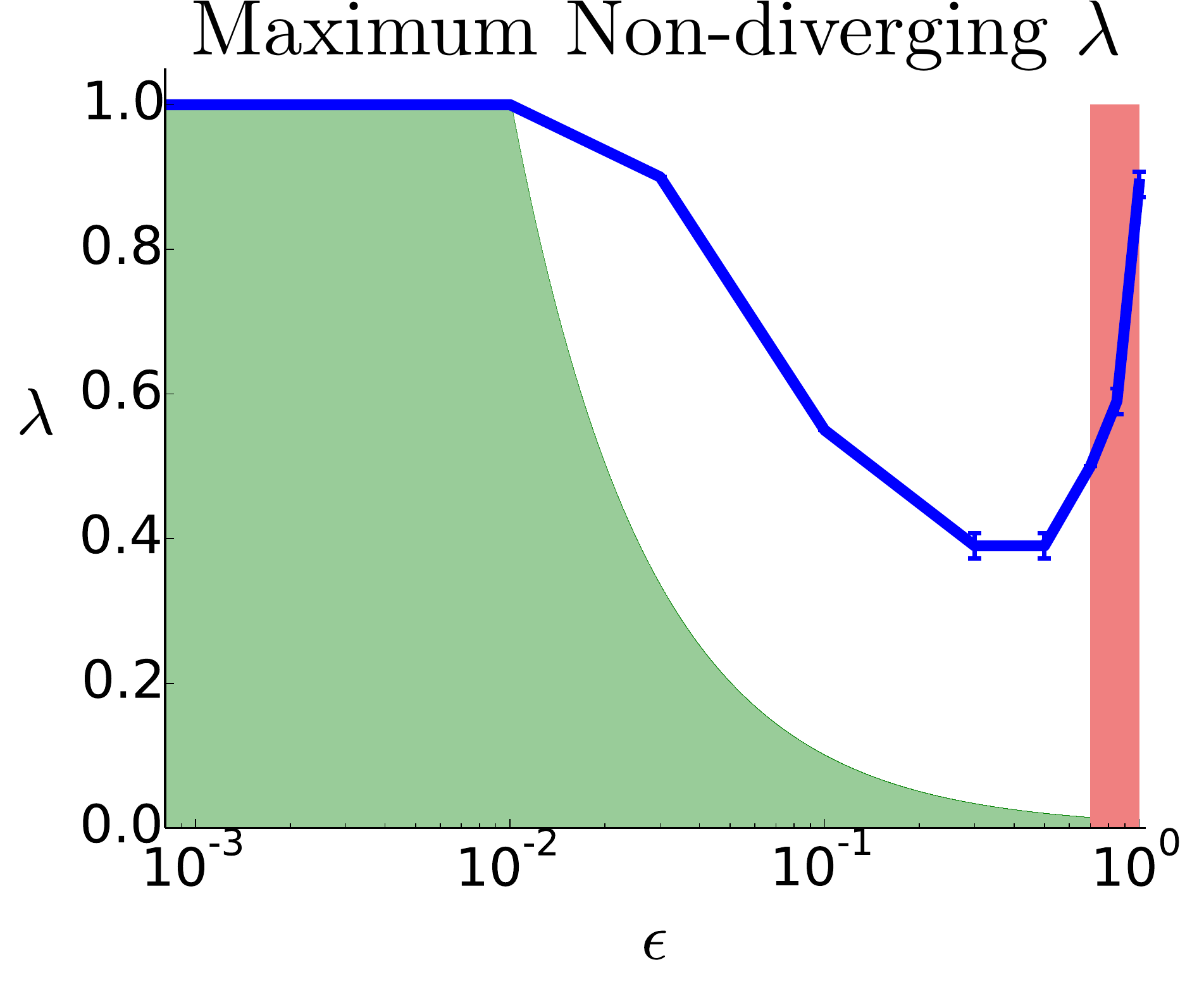}
\caption{{\textbf{Left.} Performance of Q$^*$($\lambda$) on the Bicycle
  domain. Each configuration is an average of five trials. 
  The 'X' marks the lowest value of $\lambda$ for which $\epsilon=0.03$ causes divergence.
\textbf{Right.} Maximum non-diverging $\lambda$ in function of
$\epsilon$. The left-hand shaded region corresponds to our
hypothesized bound. Parameter settings in the right-hand shaded region
do not produce meaningful policies.}}
\label{fig:excellent_bicycle_results}
\end{center}
\end{figure*}

\noindent \textbf{Learning speed and performance.} Figure
\ref{fig:excellent_bicycle_results} (left) depicts the performance of
Q$^*$($\lambda$), in terms of the goal-reaching frequency, for
three values of $\epsilon$. The agent performs best ($p < 0.05$) for
$\epsilon \in [0.003, 0.03]$ and high (w.r.t. $\epsilon$) values of $\lambda$.\footnote{Recall that Randl{\o}v and Alstr{\o}m's agent was trained using SARSA($\lambda)$ with $\lambda = 0.95$.}

\noindent \textbf{Divergence.} For each value of $\epsilon$, we
determined the highest {\em safe} choice of $\lambda$ which did not result in
divergence. As Figure \ref{fig:excellent_bicycle_results} (right)
illustrates, there is a marked decrease in what is a safe value of
$\lambda$ as $\epsilon$ increases. Note the left-hand shaded region
corresponding to the \emph{policy evaluation} bound $\frac{1-\gamma}{\gamma \epsilon}$. Supporting our hypothesis on the true bound on
$\lambda$ (Section \ref{sec:analysis}), it appears clear that the
maximum safe value of $\lambda$ depends on $\epsilon$. In particular,
notice how $\lambda = 1$ stops diverging exactly where predicted by
this bound. % With high values of $\epsilon$, we observe unstable
            % value behavior and no meaningful improvement in
            % evaluation performance. 
% An alternative explanation may be that high values of $\epsilon$ lead to larger traces and thus higher effective stepsizes. However, this seems unlikely.

\section{Related Work}\label{sec:related_work}

In this section, we place the presented algorithms in context of the
existing work in TD($\lambda$)~\cite{sutton-barto98}, focusing in
particular on action-value methods. 
As usual, let $(x_t, a_t, r_t)_{t\geq 0}$ be a trajectory generated by
 following a behavior policy $\mu$, i.e.~$a_t\sim \mu(\cdot|x_t)$. At
 time $s$, SARSA($\lambda$)~\cite{rummery1994line} updates its $Q$-function as
 follows:
 \begin{align}
 Q_{s+1}(x_{s},a_{s}) & \gets Q_s(x_s,a_s) + \alpha_s(\underbrace{A^\lambda R^{(n)}_s -
   Q(x_s,a_s)}_{\Delta_s}), \label{eq:genericQupdate} \\
  R^{(n)}_s & =\sum_{t=s}^{s+n} \gamma^{t-s} r_t +
 \gamma^{n+1}Q(x_{s+n+1},a_{s+n+1}), \label{eq:Rn_Q}
 \end{align}
where $\Delta_s$ denotes the update made at time $s$, and can be
rewritten in terms of one-step TD-errors:

\begin{align}
  \Delta_s & =\sum_{t\geq s}
  (\lambda\gamma)^{t-s}\delta_t, \label{eq:Delta_q} \\
  \delta_t & = r_t + \gamma{Q(x_{t+1},a_{t+1})} - Q(x_t,a_t). \notag %\label{eq:delta_q}
\end{align}
SARSA($\lambda$) is an on-policy algorithm 
and converges to the value
function  $Q^\mu$ of the behavior policy.
Different algorithms arise by instantiating $R^{(n)}_s$ or $\Delta_s$
from~\eqref{eq:genericQupdate} differently.
Table~\ref{tab:summary} provides the full details, 
while in text we will specify the most revealing
components of the update.

\subsection{Policy Evaluation}
One can imagine considering {\em expectations} over
action-values at the corresponding states  $\E_\pi{Q(x_t,\cdot)}$, in place of the value of
the sampled action $Q(x_t,a_t)$, i.e.:
\begin{align}
  \delta_t & = r_t + \gamma\E_\pi{Q(x_{t+1},\cdot)} -
  \E_\pi{Q(x_{t},\cdot)}. \label{eq:delta_expected_q} 
\end{align}
This is the one-step update for {\em General
  Q-Learning}~\cite{VanHasselt:2011}, which is a generalization of {\em
  Expected SARSA}~\cite{van2009theoretical} to arbitrary
policies. We refer to the direct eligibility trace
extensions of these algorithms formed via Equations~\eqref{eq:genericQupdate}-\eqref{eq:Delta_q} by General Q($\lambda$) and Expected
SARSA($\lambda$) (first mentioned by \citet{sutton2014new}) 
Unfortunately, in an off-policy setting, General Q($\lambda$)
will not converge to 
the value function $Q^\pi$ of the target
policy, as stated by the following proposition.
\begin{proposition} {} \label{prop:offpol_expected_sarsa} 
The stable point of General Q($\lambda$) is $ Q^{\mu,\pi} =
  (I-\lambda\gamma(P^{\mu}-P^{\pi}) - \gamma P^{\pi})^{-1} r$ which
  is the fixed point of the operator $(1-\lambda)\T^\pi + \lambda\T^\mu$.
\end{proposition}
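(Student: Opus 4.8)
The statement bundles two claims: that the stable point of General Q($\lambda$) is $Q^{\mu,\pi}$, and that $Q^{\mu,\pi}$ coincides with the fixed point of $(1-\lambda)\T^\pi + \lambda\T^\mu$. I would dispatch the second claim first, since it is pure algebra. The fixed point of $(1-\lambda)\T^\pi+\lambda\T^\mu$ satisfies $Q = r + \gamma[(1-\lambda)P^{\pi}+\lambda P^{\mu}]Q$, and the elementary identity $\gamma(1-\lambda)P^{\pi}+\gamma\lambda P^{\mu} = \gamma P^{\pi}+\lambda\gamma(P^{\mu}-P^{\pi})$ rearranges this into $(I-\lambda\gamma(P^{\mu}-P^{\pi})-\gamma P^{\pi})Q = r$, i.e. $Q = Q^{\mu,\pi}$. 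Invertibility is immediate because $\|\gamma(1-\lambda)P^{\pi}+\gamma\lambda P^{\mu}\|\le\gamma<1$.

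The real work is identifying the stable point. The plan is to view the update \eqref{eq:genericQupdate} as a stochastic fixed-point iteration whose target is the $\lambda$-return $A^\lambda R^{(n)}_s$ built from \eqref{eq:Rn_Q} with the expected bootstrap $\E_\pi Q(x_{s+n+1},\cdot)$ of \eqref{eq:delta_expected_q}; the stable point is then the $Q$ satisfying $Q(x,a) = \E_\mu[A^\lambda R^{(n)}_0 \mid x_0=x, a_0=a]$ for every visited $(x,a)$. First I would compute the expectation of a single $n$-step return, using two operator identities: $\E_\mu[r_t\mid x,a] = ((P^{\mu})^t r)(x,a)$ for the reward stream, and $\E_\mu[\E_\pi Q(x_{n+1},\cdot)\mid x,a] = ((P^{\mu})^n P^{\pi}Q)(x,a)$ for the bootstrap. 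The latter is the key small lemma: it says the trajectory is carried forward by $n$ behavior transitions $P^{\mu}$, while the final one-step look-ahead averages the next action under the target, producing a single $P^{\pi}$.

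Next I would interchange the $A^\lambda$ geometric average with these sums and collapse them with the resolvent. Swapping the order of summation in $(1-\lambda)\sum_{n\ge0}\lambda^n\sum_{j=0}^n(\gamma P^{\mu})^j r$ turns the reward part into $(I-\lambda\gamma P^{\mu})^{-1}r$, while the bootstrap part gives $(1-\lambda)\gamma(I-\lambda\gamma P^{\mu})^{-1}P^{\pi}Q$. Setting $Q$ equal to the sum yields the linear fixed-point equation $(I-\lambda\gamma P^{\mu})Q = r + (1-\lambda)\gamma P^{\pi}Q$, hence $(I-\gamma\lambda P^{\mu}-\gamma(1-\lambda)P^{\pi})Q = r$, and the algebraic identity from the first paragraph identifies the solution as $Q^{\mu,\pi}$.

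I expect the main obstacle to be the bookkeeping in the bootstrap identity and the resulting placement of $P^{\mu}$ versus $P^{\pi}$: one must be careful that continuation of the sampled trajectory is governed by the behavior operator $P^{\mu}$ at every step, whereas each bootstrap term averages the immediate next action under the target policy $\pi$ and so contributes exactly one factor $P^{\pi}$. Organizing the argument around the expected $\lambda$-return (rather than the TD-error form \eqref{eq:delta_expected_q}) sidesteps a subtlety that would otherwise arise, namely that the expected-value baselines $\E_\pi Q(x_t,\cdot)$ telescope against the preceding bootstraps; working with the return directly makes the surviving $Q(x,a)$ baseline, and the contraction-based invertibility of $I-\lambda\gamma P^{\mu}$, transparent.
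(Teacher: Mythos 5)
Your proposal is correct and follows essentially the same route as the paper: both express the expected update in operator form as the $\lambda$-weighted sum of $n$-step returns $\sum_{t=0}^{n}\gamma^t (P^{\mu})^{t} r + \gamma^{n+1}(P^{\mu})^{n}P^{\pi}Q$, collapse it with the resolvent $(I-\lambda\gamma P^{\mu})^{-1}$, and solve the resulting linear fixed-point equation, which is exactly the fixed-point equation of $(1-\lambda)\T^{\pi}+\lambda\T^{\mu}$. The only differences are presentational: you justify the operator form from trajectory expectations and handle the algebraic identification of $Q^{\mu,\pi}$ up front, whereas the paper asserts the operator form directly and does that algebra last.
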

\begin{proof}
Writing the algorithm in operator form, we get
\begin{align*}
  \R Q &=(1-\lambda) \sum_{n\geq 0} \lambda^n \Big[ \sum_{t=0}^{n}
  \gamma^t (P^{\mu})^{t} r + \gamma^{n+1} (P^{\mu})^{n} P^{\pi} Q
  \Big] \\
& =  \sum_{t\geq 0} (\lambda\gamma)^t (P^{\mu})^t \Big[ r +
(1-\lambda)\gamma P^{\pi} Q \Big]
%\\ & 
= (I-\lambda\gamma P^{\mu})^{-1} \Big[ r + (1-\lambda)\gamma P^{\pi}
Q \Big].
\end{align*} 
Thus the fixed
point $Q^{\mu,\pi}$ of $\R$ satisfies the following:
\begin{align*}
  Q^{\mu,\pi} & = (I-\lambda\gamma P^{\mu})^{-1} \Big[ r +
                (1-\lambda)\gamma P^{\pi}Q^{\mu,\pi} \Big] 
%\\&
= (1-\lambda) \T^{\pi} Q^{\mu,\pi} + \lambda \T^{\mu} Q^{\mu,\pi}.
\end{align*}
Solving for $Q^{\mu,\pi}$ yields the result.
\end{proof}

Alternatively to replacing both terms with an expectation, one may
only replace the value at the {\em next} state
$x_{t+1}$ by $\E_\pi{Q(x_{t+1},\cdot)}$,
obtaining:
\begin{align}
  \delta^\pi_t & = r_t + \gamma\E_\pi{Q(x_{t+1},\cdot)} -
  Q(x_{t},a_t). \label{eq:delta_pi}
\end{align}
This is exactly our policy evaluation algorithm Q$^\pi$($\lambda$).
Specifically, when $\pi = \mu$, we get
the on-policy Q$^\pi$($\lambda$). The induced {\em on-policy}
correction may serve as a variance reduction term for Expected
SARSA($\lambda$) (it may be helpful to refer to the $n$-step return in
Table~\ref{tab:summary} to observe this), but we leave variance
analysis of this algorithm for future work.
When $\pi\neq\mu$, we recover off-policy
Q$^\pi$($\lambda$), which (under the stated conditions) converges to $Q^\pi$.

\subsubsection{Target Policy Probability Methods:}
The algorithms above directly descend from basic SARSA($\lambda$), but
often learning off-policy requires special treatment. For example, a typical off-policy technique % in model-free RL 
is importance sampling (IS)~\cite{Precup2001}. It is a classical
Monte Carlo method
that allows one to sample from the available distribution,
but obtain (unbiased or consistent) samples of the desired one, by
reweighing the samples with their likelihood ratio according to the two
distributions. That is, the updates for the ordinary {\em
  per-decision} IS algorithm for policy evaluation are made as follows:
\begin{align*}
   \Delta_s 
   &= \sum_{t\geq s}
   (\lambda\gamma)^{t-s}\delta_t\prod_{i=s+1}^{t}\frac{\pi(a_i|x_i)}{\mu(a_i|x_i)}\\ % \rho_i, \\ %\frac{\pi(a_i|x_i)}{\mu(a_i|x_i)}
 \delta_t 
   &= r_t +
   \gamma\frac{\pi(a_{t+1}|s_{t+1})}{\mu(a_{t+1}|s_{t+1})}Q(x_{t+1},a_{t+1})% \frac{\pi(a_{t+1}|s_{t+1})}{\mu(a_{t+1}|s_{t+1})}
   - Q(x_t,a_t). 
% \\
%    \rho_i 
%  & \defequal \frac{\pi(a_i|x_i)}{\mu(a_i|x_i)}.
\end{align*}
This family of algorithms converges to $Q^\pi$ with probability $1$, under any soft,
stationary behavior $\mu$~% (Theorem 2 in 
\cite{precup2000eligibility}.  %).
There are several (recent) off-policy algorithms that reduce the variance of IS methods, at the cost of
added bias~\cite{mahmood2015off,mahmood2015emphatic,1509.05172}.

However, off-policy Q$^\pi$($\lambda$) is perhaps related closest
to the {\em Tree-Backup (TB) algorithm}, also discussed by
\citet{precup2000eligibility}. Its one-step TD-error is the same
as \eqref{eq:delta_pi}, the algorithms back up the same tree, and
neither requires knowledge of the behavior policy $\mu$. The
important difference is in the weighting of the updates. As an off-policy
precaution, TB($\lambda$) weighs updates along a trajectory with the
cumulative target probability of that trajectory up until that point:
\begin{align}
  \Delta_s = \sum_{t\geq s} (\lambda\gamma)^{t-s}\delta^\pi_t\prod_{i=s+1}^{t}\pi(a_i|x_i).\label{eq:Delta_TB}
\end{align}

The weighting
simplifies the convergence argument, allowing TB($\lambda$) to converge to $Q^\pi$ without
further restrictions on the distance between $\mu$ and $\pi$~\cite{precup2000eligibility}.  The drawback of TB($\lambda$) is that in the case of
near on-policy-ness (when $\mu$ is close to $\pi$) the product of the
probabilities cuts the traces unnecessarily (especially
when the policies are stochastic). What we show in this paper, is that
plain TD-learning {\em can} converge off-policy with no
special treatment, subject to a tradeoff condition on $\lambda$ and
$\epsilon$. Under that condition, Q$^\pi$($\lambda$) applies
both on- and off-policy, without modifications. An ideal algorithm
should be able to automatically cut the traces (like TB($\lambda$)) in
case of extreme off-policy-ness while reverting to Q$^\pi$($\lambda$)
when being near on-policy.

\subsection{Control}
\label{sec:related-work-control}
Perhaps the most popular version of Q($\lambda$) is due to
\citet{watkins1992}.
Off-policy, it truncates the return and bootstraps as soon as the
behavior policy takes a non-greedy action, as described by the following update:
\begin{align}
  \Delta_s & =
  \sum_{t=s}^{s+\tau}(\lambda\gamma)^{t-s}\delta_t, \label{eq:watkins-q-lambda} 
%\\
 %  \delta_t & = r_t + \gamma\max_a Q(x_{t+1},a) - Q(x_t,a_t). \notag
\end{align}
where $\tau=\min\{u\geq 1: a_{s+u}\notin \argmax_a Q(x_{s+u}, a)\}$.
% is the first time when the behavior policy takes a non-greedy
% action. 
Note that this update is a special case of \eqref{eq:Delta_TB} for
deterministic greedy policies, with $\prod_{i={s+1}}^t\indic{a_i \in \argmax_a Q(x_i, a)}$ replacing the probability
product.
When the policies $\mu$
and $\pi$ are not too similar, and $\lambda$ is not too
 small, the truncation may greatly reduce the benefit of complex
 backups.

Q($\lambda$) of \citet{peng1996incremental} is meant to remedy this, by
being a hybrid between SARSA($\lambda$) and Watkins's
Q($\lambda$). Its $n$-step return $\sum_{t=s}^{s+n} \gamma^{t-s} r_t + \gamma^{n+1} \max_a Q(x_{s+n+1}, a)$
requires the following form for the TD-error:
 \begin{align}
   \delta_t & = r(x_t, a_t) + \gamma \max_a
 Q(x_{t+1}, a) - \max_a Q(x_t, a). \notag %\label{eq:Peng.Q.lambda.1}
\end{align}
This is, in fact, the same update rule as the General Q($\lambda$) defined in \eqref{eq:delta_expected_q},
where $\pi$ is the greedy policy. Following the same steps as in the proof of Proposition~\ref{prop:offpol_expected_sarsa}, the limit of this algorithm (if it converges) will be the fixed point of the operator $(1-\lambda)\T + \lambda\T^\mu$ which is different from $Q^*$ unless the behavior is always greedy. 

\citet{sutton-barto98} mention another, {\em naive} version of
Watkins's Q($\lambda$) that does not cut the trace on non-greedy actions. That
is exactly the Q$^*$($\lambda$) algorithm described in
this paper. Notice that despite the similarity to Watkins's Q($\lambda$), the equivalence
representation for Q$^*$($\lambda$) is different from the one that would
be derived by setting $\tau=\infty$ in \eqref{eq:watkins-q-lambda},
since the $n$-step return uses the {\em corrected} immediate reward
$r_t + \gamma \max_a Q(x_{t}, a) - Q(x_t,a_t)$ % ~\eqref{eq:td-q-*}
instead of the immediate reward alone. This correction is invisible in Watkins's
Q($\lambda$), since the behavior policy is assumed to be greedy,
before the return is cut off.

\section{Conclusion}
\label{sec:closing}

% In this paper 
We formulated new algorithms of the TD$(\lambda)$ family for
off-policy policy evaluation and control. Unlike traditional off-policy learning algorithms, these methods
do not involve weighting returns by their policy probabilities, yet
under the right conditions converge to the correct TD fixed points.
In policy evaluation, convergence is subject to a tradeoff between the
degree of bootstrapping $\lambda$, distance between policies
$\epsilon$, and the discount factor $\gamma$.
In control, determining the existence of a non-trivial $\epsilon$-dependent bound for $\lambda$ remains an open problem. Supported by telling empirical
results in the Bicycle domain, we hypothesize that such a bound
exists, and closely resembles the $\frac{1-\gamma}{\gamma \epsilon}$
bound from the policy evaluation case.

\section*{Acknowledgements}
The authors thank Hado van Hasselt and others at Google DeepMind, as well as the anonymous reviewers for
their thoughtful feedback on the paper.

\small
\bibliographystyle{plainnat}
%\bibliography{../qlambda_short}

\begin{thebibliography}{22}
\providecommand{\natexlab}[1]{#1}
\providecommand{\url}[1]{\texttt{#1}}
\expandafter\ifx\csname urlstyle\endcsname\relax
  \providecommand{\doi}[1]{doi: #1}\else
  \providecommand{\doi}{doi: \begingroup \urlstyle{rm}\Url}\fi

\bibitem[Bellman(1957)]{Bellman:1957}
Richard Bellman.
\newblock \emph{Dynamic Programming}.
\newblock Princeton University Press, 1957.

\bibitem[Bertsekas and Tsitsiklis(1996)]{bertsekas1996neurodynamic}
Dimitry~P. Bertsekas and John~N. Tsitsiklis.
\newblock \emph{Neuro-Dynamic Programming}.
\newblock Athena Scientific, 1996.

\bibitem[Hallak et~al.(2015)Hallak, Tamar, Munos, and Mannor]{1509.05172}
Assaf Hallak, Aviv Tamar, R\'{e}mi Munos, and Shie Mannor.
\newblock Generalized emphatic temporal difference learning: Bias-variance
  analysis.
\newblock \emph{arXiv:1509.05172}, 2015.

\bibitem[Kearns and Singh(2000)]{kearns2000bias}
Michael~J. Kearns and Satinder~P. Singh.
\newblock Bias-variance error bounds for temporal difference updates.
\newblock In \emph{Conference on Computational Learning Theory}, pages
  142--147, 2000.

\bibitem[Mahmood and Sutton(2015)]{mahmood2015off}
Ashique~R. Mahmood and Richard~S. Sutton.
\newblock Off-policy learning based on weighted importance sampling with linear
  computational complexity.
\newblock In \emph{Conference on Uncertainty in Artificial Intelligence}, 2015.

\bibitem[Mahmood et~al.(2015)Mahmood, Yu, White, and
  Sutton]{mahmood2015emphatic}
Ashique~R. Mahmood, Huizhen Yu, Martha White, and Richard~S. Sutton.
\newblock Emphatic temporal-difference learning.
\newblock \emph{arXiv preprint arXiv:1507.01569}, 2015.

\bibitem[Peng and Williams(1996)]{peng1996incremental}
Jing Peng and Ronald~J. Williams.
\newblock Incremental multi-step q-learning.
\newblock \emph{Machine Learning}, 22\penalty0 (1-3):\penalty0 283--290, 1996.

\bibitem[Precup et~al.(2000)Precup, Sutton, and Singh]{precup2000eligibility}
Doina Precup, Richard~S. Sutton, and Satinder Singh.
\newblock Eligibility traces for off-policy policy evaluation.
\newblock In \emph{International Conference on Machine Learning}, 2000.

\bibitem[Precup et~al.(2001)Precup, Sutton, and Dasgupta]{Precup2001}
Doina Precup, Richard~S. Sutton, and Sanjoy Dasgupta.
\newblock Off-policy temporal-difference learning with function approximation.
\newblock In \emph{International Conference on Machine Learning}, 2001.

\bibitem[Puterman(1994)]{Puterman:1994}
Martin~L. Puterman.
\newblock \emph{Markov Decision Processes: Discrete Stochastic Dynamic
  Programming}.
\newblock John Wiley \& Sons, Inc., New York, NY, USA, 1st edition, 1994.

\bibitem[Randl{\o}v and Alstr{\o}m(1998)]{randlov98learning}
Jette Randl{\o}v and Preben Alstr{\o}m.
\newblock Learning to drive a bicycle using reinforcement learning and shaping.
\newblock In \emph{International Conference on Machine Learning}, 1998.

\bibitem[Rummery and Niranjan(1994)]{rummery1994line}
Gavin~A. Rummery and Mahesan Niranjan.
\newblock On-line q-learning using connectionist systems.
\newblock Technical report, Cambridge University Engineering Department., 1994.

\bibitem[Singh and Dayan(1998)]{singh1998analytical}
Satinder Singh and Peter Dayan.
\newblock Analytical mean squared error curves for temporal difference
  learning.
\newblock \emph{Machine Learning}, 32\penalty0 (1):\penalty0 5--40, 1998.

\bibitem[Sutton(1988)]{sutton1988learning}
Richard~S. Sutton.
\newblock Learning to predict by the methods of temporal differences.
\newblock \emph{Machine learning}, 3\penalty0 (1):\penalty0 9--44, 1988.

\bibitem[Sutton(1996)]{sutton1996generalization}
Richard~S. Sutton.
\newblock Generalization in reinforcement learning: Successful examples using
  sparse coarse coding.
\newblock In \emph{Advances in Neural Information Processing Systems}, 1996.

\bibitem[Sutton and Barto(1998)]{sutton-barto98}
Richard~S. Sutton and Andrew~G. Barto.
\newblock \emph{Reinforcement learning: An introduction}.
\newblock Cambridge Univ Press, 1998.

\bibitem[Sutton et~al.(2014)Sutton, Mahmood, Precup, and van
  Hasselt]{sutton2014new}
Richard~S. Sutton, Ashique~R. Mahmood, Doina Precup, and Hado van Hasselt.
\newblock A new q ({$\lambda$}) with interim forward view and monte carlo
  equivalence.
\newblock In \emph{International Conference on Machine Learning}, pages
  568--576, 2014.

\bibitem[van Hasselt(2011)]{VanHasselt:2011}
Hado~Philip van Hasselt.
\newblock \emph{Insights in Reinforcement Learning: formal analysis and
  empirical evaluation of temporal-difference learning algorithms}.
\newblock PhD thesis, Universiteit Utrecht, January 2011.

\bibitem[van Seijen and Sutton(2014)]{seijen2014true}
Harm van Seijen and Richard~S. Sutton.
\newblock True online {TD($\lambda$)}.
\newblock In \emph{International Conference on Machine Learning}, pages
  692--700, 2014.

\bibitem[van Seijen et~al.(2009)van Seijen, van Hasselt, Whiteson, and
  Wiering]{van2009theoretical}
Harm van Seijen, Hado van Hasselt, Shimon Whiteson, and Marco Wiering.
\newblock A theoretical and empirical analysis of expected sarsa.
\newblock In \emph{Adaptive Dynamic Programming and Reinforcement Learning},
  pages 177--184. IEEE, 2009.

\bibitem[Watkins and Dayan(1992)]{watkins1992}
Christopher J. C.~H. Watkins and Peter Dayan.
\newblock Q-learning.
\newblock \emph{Machine Learning}, 8\penalty0 (3):\penalty0 272--292, 1992.

\bibitem[Watkins(1989)]{watkins1989learning}
Christopher John Cornish~Hellaby Watkins.
\newblock \emph{Learning from delayed rewards}.
\newblock PhD thesis, King's College, Cambridge, 1989.

\end{thebibliography}

\begin{landscape}
\begin{table}[h]

\centering
\caption{Comparison of the update rules of several learning algorithms
  using the $\lambda$-return. We show both the $n$-step return and the
  resulting update rule for the $\lambda$-return from any state $x_s$
  when following a behavior policy $a_t\sim \mu(\cdot|x_t)$. \textbf{Top part.
  Policy evaluation algorithms:} 
  SARSA($\lambda$), Expected
  SARSA($\lambda$), General
  Q($\lambda$), Per-Decision Importance Sampling (PDIS($\lambda$)),
  TB($\lambda$), and Q$^{\pi}$($\lambda$), in both on-policy
  (i.e.~$\pi=\mu$) and off-policy settings (with a target policy
  $\pi\neq \mu$). Note the same $Q^{\pi}(\lambda)$ equation
  applies to both on- and off-policy settings. We
  abbreviate $\pi_i \equiv \pi(a_i|x_i)$, $\mu_i \equiv \mu(a_i|x_i)$,
  $\rho_i \equiv
  \pi_i/\mu_i$, and write $\E^{a\neq b}_\pi Q(x,\cdot) \equiv
  \sum_{a\in\A\backslash b} \pi(a|x) Q(x,a)$.
    \textbf{Bottom part, control algorithms:} Watkins's $Q(\lambda)$,
Peng and Williams's $Q(\lambda)$, and $Q^*(\lambda)$. The \textbf{FP} column
denotes the stable point of these algorithms (i.e. the fixed point of the expected
update), regardless of whether the algorithm converges to it.
General Q($\lambda$) may converge to $Q^{\mu,\pi}$ defined as the fixed
point of the Bellman operator $(1-\lambda)\T^{\pi}+\lambda \T^{\mu}$.
The fixed point of Watkins's Q($\lambda$) is $Q^*$ but the case $\lambda>0$
may not be significantly better than $\lambda=0$ (regular Q-learning) 
if the behavior policy is different from the greedy one. The fixed point $Q^{\mu,*}$ of Peng and Williams's $Q(\lambda)$ is the fixed point of
$(1-\lambda)\T+\lambda \T^{\mu}$, which is different from $Q^*$ when
$\mu\neq\pi$ (see Proposition 1).
The algorithms analyzed in this paper are $Q^{\pi}(\lambda)$ and $Q^*(\lambda)$, for which convergence to respectively $Q^\pi$ and $Q^*$ occurs under some conditions (see Lemmas). 
}
\begin{tabular}{l||l|l|l} 
{\bf Algorithm}  & {\bf $n$-step return} & {\bf Update rule for the
  $\lambda$-return} & {\bf FP} \\
\hline\hline
 TD($\lambda$) & $\sum_{t=s}^{s+n} \gamma^{t-s} r_t + \gamma^{n+1} V(x_{s+n+1})$ &  $\sum_{t\geq s} (\lambda\gamma)^{t-s} \delta_t$ & $V^{\mu}$ \\
 (on-policy)        &       &  $\delta_t = r_t+\gamma V(x_{t+1}) - V(x_t)$ &   \\ 
 \hline
SARSA($\lambda$) & $\sum_{t=s}^{s+n} \gamma^{t-s} r_t + \gamma^{n+1} Q(x_{s+n+1}, a_{s+n+1})$  & $\sum_{t\geq s} (\lambda\gamma)^{t-s} \delta_t$ & $Q^{\mu}$ \\
(on-policy)        &       &  $\delta_t = r_t+\gamma Q(x_{t+1}, a_{t+1}) - Q(x_t, a_t)$ &   \\ 
\hline
$\E$ SARSA($\lambda$)& $\sum_{t=s}^{s+n} \gamma^{t-s} r_t + \gamma^{n+1} \E_{\mu} Q(x_{s+n+1}, \cdot)$  & $\sum_{t\geq s} (\lambda\gamma)^{t-s} \delta_t + \E_{\mu} Q(x_s,\cdot)-Q(x_s, a_s)$ & $Q^{\mu}$ \\
(on-policy) &       &  $\delta_t = r_t+\gamma \E_{\mu} Q(x_{t+1}, \cdot) - \E_{\mu} Q(x_t, \cdot)$ &   \\ 
\hline
General Q($\lambda$)& $\sum_{t=s}^{s+n} \gamma^{t-s} r_t + \gamma^{n+1} \E_{\pi} Q(x_{s+n+1}, \cdot)$  & $\sum_{t\geq s} (\lambda\gamma)^{t-s} \delta_t + \E_{\pi} Q(x_s,\cdot)-Q(x_s, a_s)$ & $Q^{\mu,\pi}$ \\
(off-policy) &       &  $\delta_t = r_t+\gamma \E_{\pi} Q(x_{t+1}, \cdot) - \E_{\pi} Q(x_t, \cdot)$ &   \\ 
\hline
PDIS($\lambda$) &  $\sum_{t=s}^{s+n}
\gamma^{t-s} r_t\prod_{i=s+1}^{t}\rho_i$
&  $\sum_{t\geq s}
(\lambda\gamma)^{t-s}\delta_t\prod_{i=s+1}^{t}\rho_i$
& $Q^{\pi}$ \\
(off-policy) &  +$\gamma^{n+1} Q(x_{s+n+1}, a_{s+n+1})\prod_{i=s}^{s+n}\rho_i $ &  $\delta_t = r_t
+\gamma\rho_{t+1}Q(x_{t+1},a_{t+1})
- Q(x_t,a_t)$ 
&  \\ 
 \hline
 TB($\lambda$) & $\sum_{t=s}^{s+n} \gamma^{t-s}\prod_{i=s+1}^{t}\pi_i \big[ r_t + \gamma\E^{a\neq a_{t+1}}_{\pi} Q(x_{t+1}, \cdot)\big]$
% $\sum_{t=s}^{s+n} \gamma^{t-s} \big[ r_t +
% \E_{\pi} Q(x_{t}, \cdot) -  Q(x_{t}, a_{t}) \big]$ 
 & $\sum_{t\geq s} (\lambda\gamma)^{t-s} \delta_t\prod_{i=s+1}^{t}\pi_i$ & $Q^{\pi}$ \\
 (off-policy) &  $+\gamma^{n+1} \prod_{i=s+1}^{s+n+1}\pi_i Q(x_{s+n+1}, a_{s+n+1})$ % $+ \gamma^{n+1} Q(x_{s+n+1}, a_{s+n+1})\prod_{i=s}^{n}\pi_i$
 &  $\delta_t = r_t+\gamma \E_{\pi} Q(x_{t+1}, \cdot) - Q(x_t, a_t)$ &  \\ 
\hline
\textcolor{blue}{{\bf $\mathbf Q^\pi(\lambda)$}} & $\sum_{t=s}^{s+n} \gamma^{t-s} \big[ r_t +  \E_{\pi} Q(x_{t}, \cdot) -  Q(x_{t}, a_{t}) \big]$  & $\sum_{t\geq s} (\lambda\gamma)^{t-s} \delta_t $ & $Q^{\pi}$ \\
(on/off-policy) & $+\gamma^{n+1} \E_{\pi} Q(x_{s+n+1}, \cdot)$      &  $\delta_t = r_t+\gamma \E_{\pi} Q(x_{t+1}, \cdot) - Q(x_t, a_t)$ &   \\ 
\hline\hline
Q($\lambda$) & $\sum_{t=s}^{s+n} \gamma^{t-s} r_t + \gamma^{n+1}
\max_a Q(x_{s+n+1}, a)$  & $\sum^{s+\tau}_{t = s} (\lambda\gamma)^{t-s}
\delta_t\prod_{i=s+1}^t $ & $Q^*$ \\
(Watkins's) &   (for any $n < \tau = \argmin_{u \geq 1}\indic{\pi_{s+u}\neq\mu_{s+u}}$)
&  $\delta_t = r_t+\gamma \max_a Q(x_{t+1}, a) - Q(x_t, a_t)$ &   \\ [1.2ex]
\hline
Q($\lambda$) & $\sum_{t=s}^{s+n} \gamma^{t-s} r_t + \gamma^{n+1} \max_a Q(x_{s+n+1}, a)$  & $\sum_{t=s}^{s+n} (\lambda\gamma)^{t-s} \delta_t + \max_a Q(x_s,a)-Q(x_s, a_s) $ 
                    & $Q^{\mu,*}$ \\
(P \& W's) &    &  $\delta_t = r_t+\gamma \max_a Q(x_{t+1}, a) - \max_a Q(x_t, a)$ 
                    &   \\ 
\hline
\textcolor{blue}{{\bf $\mathbf Q^*(\lambda)$}} & $\sum_{t=s}^{s+n} \gamma^{t-s} \big[ r_t +  \max_a Q(x_{t}, a) -  Q(x_{t}, a_{t}) \big]$ 
                                         & $\sum_{t\geq s} (\lambda\gamma)^{t-s} \delta_t $ & $Q^{*}$ \\
       & $+\gamma^{n+1} \max_a Q(x_{s+n+1}, a)$      &  $\delta_t = r_t+\gamma \max_a Q(x_{t+1}, a) - Q(x_t, a_t)$ &   \\ 
\hline\hline
\end{tabular}
\label{tab:summary}
\end{table}
\end{landscape} 

\end{document}